\def\R{\mathbb{R}}
\def\E{\mathbb{E}}
\def\ab{\alpha\beta}
\def\cip{\xrightarrow{p}} 
\newtheorem{thm}{Theorem}[section]
\newtheorem{lem}[thm]{Lemma}
\title{Approximating Simple ReLU Networks based on Spectral Decomposition of Fisher Information}
\author{%
  Ka Long Keith Ho \\
  Joint Graduate School of Mathematics for Innovation\\
  Kyushu University\\
  Fukuoka, Japan 819-0395\\
  \texttt{ho.kalongkeith.224@s.kyushu-u.ac.jp} 
  \And
  Yoshinari Takeishi\\
  Faculty of Information Science and Electrical Engineering \\
  Kyushu University\\
  Fukuoka, Japan 819-0395\\
  \texttt{takeishi@inf.kyushu-u.ac.jp} 
  \AND
  Jun’ichi Takeuchi\\
  Faculty of Information Science and Electrical Engineering \\
  Kyushu University\\
  Fukuoka, Japan 819-0395\\
  \texttt{tak@inf.kyushu-u.ac.jp} \\
}
\begin{document}

\maketitle

\begin{abstract}
Properties of Fisher information matrices of 2-layer neural ReLU networks with random hidden weights are studied. For these networks, it is known that the eigenvalue distribution highly concentrates on several eigenspaces approximately. In particular, the eigenvalues for the first three eigenspaces account for 97.7\% of the trace of the Fisher information matrix, independently of the number of parameters. In this paper, we identify the function spaces which correspond to those major eigenspaces. This function space consists of the spherical harmonic functions whose orders are not greater than 2. This result relates to the Mercer decomposition of the neural tangent kernels.
\end{abstract}

\section{Introduction}

We study the function space represented by a two-layer
(one hidden layer, bias-free)
ReLU neural network with randomly generated
fixed hidden weights, based on the properties of 
the Fisher information matrix shown by
Takeishi et al.\
\cite{Takeishi,TakeishiNN,Takeishi24}.

In our analysis, we assume that the input dimension is $d$,
the number of the hidden units is $m$, and the output is one-dimensional.
Let $f_v(x)$ denote the function the neural network represents,
where $v\in \mathbb{R}^{m \times 1}$ is a weight vector for the output layer.
We assume that the input \(x\) follows the \(d\)-dimensional standard
Gaussian distribution and that the output variable \(y\) is generated by
\begin{align}\label{ReluModel}
    y=f_v(x)+\epsilon
    =X(x)^\top v+\epsilon,
\end{align}
where
\(X=X(x)=\sigma(x^\top W)\in\mathbb{R}^{m\times 1}\) denotes the output
of the hidden layer with the ReLU activation function \(\sigma(\cdot)\)
applied component-wise, and \(\epsilon\) is a standard Gaussian noise.
$W \in \mathbb{R}^{d \times m}$ is the weight matrix
from the input layer to the hidden layer
whose $(i,j)$-entry is the weight from the $i$th input node
to the $j$th hidden node.

In this setting, Takeishi et al.\ derived
an approximate spectral decomposition of $J$,
showing that
its eigenvalue distribution 
highly concentrates and the sum of the top $O(d^2)$ eigenvalues
accounts for 97.7\% of its trace at least,
no matter how $m$ is large \cite{Takeishi,TakeishiNN,Takeishi24}.

In this paper,
we identify the function space which
is induced by the eigenspaces with the eigenvalues that account for
97.7\% of the trace of $J$. 
It means that we derive an 
$O(d^2)$-dimensional linear model,
which is equivalent to the model $f_v$, 
ignoring 2.3\% portion of the target neural network model.

The approximate spectral decomposition 
of $J$ is summarized as follows.
Note that the terms {\it eigenvector} and {\it eigenvalue} in
the following means approximate ones.
The first eigenvalue nearly equals
$(2d+1)/4\pi$, the second is $1/4$ with the multiplicity $d$,
and the third is $1/2\pi(d+2)$ with the multiplicity $d(d+1)/2-1$.
In addition, the corresponding eigenvectors are identified.
The eigenvector with the first eigenvalue, denoted $v^{(0)}$ 
is close to a uniform vector.
To be more specific, it is given as
\[
v^{(0)} = 
\biggl(\frac{\Vert W_{*1}\Vert}{\sqrt{d}},...,
\frac{\Vert W_{*m}\Vert}{\sqrt{d}}\biggr)^\top.
\]
Here,  
$W_{*j}$ denotes the $j$th column vector of $W$.
The eigenvectors with the eigenvalue $1/4$
are $v^{(l)} = W_{l*}^\top$ for each $l \in \{1,\ldots,d \}$, where $W_{l*}$ denotes the $l$th row vector of $W$.

To analyze the function space,
we assume that each entry of $W$ is independently drawn according to 
the Gaussian distribution with the mean $0$ and the variance
$1/m$. It means that $v^{(0)}$ and $v^{(l)}$
are approximately normalized with high probability.

The eigenspace with the eigenvalue $1/2\pi(d+2)$
 is spanned by the vectors
$v^{(\alpha,\beta)}$ ($1 \le \alpha < \beta \le d$)
and $v^{(\gamma)}$ ($1 \le \gamma \le d$)
defined as
\begin{align*}
v^{(\alpha,\beta)} 
& \propto
\biggl(
\frac{W_{\alpha 1}W_{\beta 1}}{\|W_{*1}\|},
\ldots,
\frac{W_{\alpha m}W_{\beta m}}{\|W_{*m}\|}
\biggr)^\top, \\
\tilde{v}^{(\gamma)}
& \propto
v^{(\gamma,\gamma)} -\frac{\sum_{\gamma'=1}^d v^{(\gamma',\gamma')}}{d}.
\end{align*}
Note that $v^{(\alpha,\beta)}$ for $\alpha = \beta$ is used to
define $\tilde{v}^{(\gamma)}$.
The manipulation obtaining $\tilde{v}^{(\gamma)}$
from $v^{(\gamma,\gamma)}$ is needed to remove the component
parallel to $v^{(0)}$.
The dimension of the space spanned by $\tilde{v}^{(\gamma)}$ 
is $d-1$. 
As a result, the dimension of this eigenspace is $d(d+1)/2-1$.
Finally, we define $v^{(\gamma)}$ ($1 \le \gamma \le d-1$)
by orthogonalization of $\tilde{v}^{(\gamma)}$.

Note that the trace of the Fisher information matrix nearly equals
$d/2$ with high probability with respect to the distribution of $W$.
Using this fact, 
it was shown that
the sum of all the eigenvalues corresponding to the three 
eigenspaces is more than 97.7\% of $d/2$.

This fact suggests that the 3 eigenspaces 
are sufficient to approximate the function space
represented by $f_v$.
This observation is our motivation to 
analyze the function space corresponding to the three eigenspaces
of the parameter $v$.

Our main results in this paper are summarized as follows 
(the convergence is in probability).
\begin{align*}
\lim_{m \rightarrow \infty}f_{v^{(0)}} &=
F_0(x)
\propto \Vert x \Vert, \\
\lim_{m \rightarrow \infty}
f_{v^{(l)}}
&=
F_l(x) \propto \frac{x_l}{2},\\
\lim_{m \rightarrow \infty}
f_{v^{(\alpha,\beta)}}
&=
F_{\alpha\beta}(x) \propto \frac{x_\alpha x_\beta}{\| x\|},
\end{align*}
where 
$1 \le l \le d$ and $1 \le \alpha \le \beta \le d$.
Note that 
we can obtain an orthonormal basis of the function space
corresponding to the 
third eigenspace of $v$
by the same algebraic manipulation to
obtain $v^{(\gamma)}$ from $v^{(\gamma,\gamma)}$.
Hence, we introduce $F_\gamma$
by the same manner. 
We employ 
$F_{\alpha \beta}$ ($1 \le \alpha < \beta \le d$)
and 
$F_\gamma$ ($1 \le \gamma \le d-1$)
as the orthogonal basis of the function space
corresponding to the third eigenspace of $v$.

To investigate the implication of our results,
we consider the metric of the function space corresponding to the
Fisher information.
For the model $y=f_v(x)+\epsilon$, 
the Fisher information of $v$ is induced by 
the inner product of the function space
over $\mathbb{R}^d$ (the standard $l_2$ norm w.r.t.\ $N(0,I_d)$)
which is defined as
$
\langle f, g \rangle = \E_{x \sim N(0,I_d)}\left[ f(x) g(x) \right].
$
We consider the linear space $\mathcal{H}$ which 
consists of all the functions $f$ with $\langle f, f \rangle < \infty$.
In our setting, the above inner product corresponding the Fisher metric
for the regression model $\{ y=f(x)+\epsilon : f \in \mathcal{H} \}$.

Here, the set  $\{ X^\top v : v \in \mathbb{R}^m\}$ forms a linear subspace of $\mathcal{H}$.
Then, we have $J_{ij} = \langle X^\top e_i, X^\top e_j \rangle$,
where $e_i$ denotes the standard unit vector 
of $\mathbb{R}^m$
whose $i$th component is $1$.
We also have $\langle f_u, f_v \rangle$
$=\langle X^\top u, X^\top v \rangle$
$=u^\top Jv$.



Since we define $F_0$, $F_l$, $F_{\gamma}$, 
and
$F_{\alpha \beta}$ based on the approximate unit eigenvectors of 
the three eigenvalues of $J$, 
their norms equal the square root of the corresponding eigenvalues, respectively. The orthogonality between them also approximately holds.

Let us employ a new parameter $u \in \mathbb{R}^{m \times 1}$ 
which diagonalizes $J$ by an orthogonal matrix $U$ as $v = U u$. 
Then, when $m$ is large, we can approximate $f_v(x)$ with high probability as
\begin{align}\label{theapproximation}
f_{Uu}(x) & =  u_1 F_0(x) + \sum_{l=1}^{d} u_{l+1} F_l(x) \\ \nonumber
+ & \sum_{\gamma=1}^{d-1} u_{d+1+\gamma} F_{\gamma}(x)
  + \sum_{\alpha < \beta} u_{k(\alpha,\beta)} F_{\alpha \beta}(x)
  + r_u(x),
\end{align}
where $k(\alpha,\beta)$ is a bijection from the range of $(\alpha,\beta)$
onto the set 
$\{2d+1,2d+2,\ldots, d+d(d+1)/2 \}$
and a residual term $r_v(x)$ is small.



Moreover, we can get an insight into gradient descent using \eqref{theapproximation}.
First note that the gradient descent
with respect to the new parameter $u$
is equivalent to that with respect to the
original parameter $v$, since
$u$ is obtained by an isometric mapping of $v$.
Recall that the convergence rate of the the gradient descent is determined by eigenvalues of the Hessian matrix of training error. 
In fact, the training for the direction of the eigenvectors of large eigenvalues is fast. Note that the Hessian matrix is approximately equal to the FIM when 
the estimate is near the optimal point and 
the number of data is large enough
compared to the number of parameters. This condition may be relaxed for our case, where the eigenvalue distribution is highly concentrates. 
Hence, we can assume that the convergence rate may be determined solely by the eigenvalues of the FIM. Then, we can claim by \eqref{theapproximation} that the training about the feature $\Vert x \Vert$ is fastest, the training about the features $x_l$ is next, and the training about the features related to $x_i x_j$ follows. 


It is interesting that
the functions $\| x \|$, $x_l /2$, and $x_\alpha x_\beta/\| x \|$
relate to the spherical harmonic functions over $S_{d-1}$ 
(the surface of the $d$-dimensional unit sphere).
The spherical harmonic functions of degree $k$
over $S_{d-1}$
are given by restricting the 
the homogeneous polynomials of $z \in \R^d$ of degree $k$
which satisfy the Laplace's equation to $S_{d-1}$.
It is known that the spherical harmonic functions
gives an orthonormal basis of the set of (appropriately defined)
functions over $S_{d-1}$
with respect to the inner product 
$\langle f, g \rangle = \int_{S_{d-1}} f(x)g(x) dS$,
where $dS$ is the uniform probability measure over $S_{d-1}$.

Note that,
for a spherical harmonic function $\psi(z)$,
the function $\|x \|\psi(x /\| x\|)$ is defined over $\R^d$.
The functions $\| x \|$, $x_l /2$, and $x_\alpha x_\beta/\| x \|$
can be obtained by this manipulation.
For example, $\| x \|$ is obtained by $\psi(z)=1$
and $x_\alpha x_\beta/\| x\|$ by $\psi(z)=z_\alpha z_\beta$.
In fact, our result closely related to the Mercer decomposition 
of neural tangent kernel (NTK) for two-layer networks,
which was shown and  
discussed by Bietti and Ma \cite{Bietti19},
and discussed in \cite{Bartlett}.
Note that their analysis is about more general model
than ours, that is, $W$ is not assumed to be constant,
but they decomposed the NTK to those for $W$ and $v$.
Let $k(x_1,x_2)$ denote the neural tangent kernel for $v$.
Note that $k(x_1,x_2)$ defines a linear mapping over the function space
on $\R^d$.
The Mercer decomposition is the spectral decomposition
of $k(x_1,x_2)$ (as a linear mapping)
using the spherical harmonic functions as the orthonormal basis.
The Mercer decomposition is 
parallel to the approximate spectral decomposition of $J$.
In fact, the linear mapping induced by $J$ is nearly isomorphic to
the linear mapping by $k(x_1,x_2)$.
(It is `nearly' because $J$ is defined for finite $m$.)
Hence, our result can be interpreted as it approximately 
identifies the eigenvalues
of the first three degrees of the Mercer decomposition.
The works \cite{Bietti19,Bartlett} describe the
tail behavior of
eigenvalue distribution of $k(x_1,x_2)$, that is,
they state $\lambda_k = O(k^{-d})$, where the 
order is defined as $k$ goes to infinity,
but they do not give the concrete values. 




In our two-layer neural network setting, the first-layer weights are randomly sampled and fixed, and only the second-layer weights are trained. This framework was first introduced in \cite{RahimiRecht} and is widely studied under the name of random feature regression, and its generalization performance has been reviewed, for example, in \cite{Bartlett} and \cite{MeiMontanari}. Contrary to the generalization error analysis performed in these works, our work examines the function space dynamics of a ReLU network, deriving the asymptotic functional forms of basis functions aligned with the FIM's approximate eigenvectors. Both papers employ rigorous probabilistic tools to study overparametrized models, with our analysis extending to the interplay between parameter and function spaces via the FIM-induced inner product, complementing their insights into generalization performance. This shared emphasis on random feature models and asymptotic analysis underscores a common goal of understanding the expressive and generalization capabilities of wide neural networks.

In the field of neural networks,
a similar concept to the above research
line, named 
neural networks with random weights (NNRW)
 has been studied. (See \cite{nnrw}.)

\section{Preliminaries}

We introduce important notations and 
explain necessary theoretical background. 

\subsection{Notations}
For a vector $v$ and matrix $A$, we use $\Vert v \Vert$ to denote the $l_2$ norm and $\Vert A \Vert$ for its spectral norm. We also denote $v^\top$ and $A^\top$ for their transposes and for $i,j \in \mathbb{Z^+}$, $v_{-i}$ and $v_{-ij}$ as the vector $v$ with the $i$th, or $i$th and $j$th components removed. We will use $\phi$ and $\Phi$ as the density and cumulative distribution functions of the standard normal distribution, $B(\cdot,\cdot)$ as the beta function, $\cip$ for convergence in probability as $m \to \infty$, and LHS and RHS to abbreviate for the left and right-hand sides of equations.

\subsection{Approximate Eigenvectors of Fisher information matrix}\label{sec_approx_eigen}
In this subsection, we review the approximate eigenvectors of Fisher information matrix (FIM) derived by \cite{Takeishi,TakeishiNN,Takeishi24}.
We study the estimation of the parameter $v$ in the model \eqref{ReluModel} and its FIM, defined by
\[
J = \mathbb{E}_{x \sim N(0, I_d)}[X(x) X^{\top}(x)].
\]
When the entries of the weight matrix $W$ are generated as i.i.d.\ $N(0,1/m)$ random variables, the eigenvalue distribution of $J$ exhibits clustering. 
In \cite{Takeishi,TakeishiNN,Takeishi24}, the eigenvalues and eigenvectors of the first three eigenvalue clusters were analyzed.


Here, we give an orthonormal basis of the third eigenspace of $J$.
Let the $i$th element of $v^{(\alpha,\beta)} \in \R^{m \times 1}$ be
\[
v^{(\alpha,\beta)}_i 
= 
\sqrt{d + 2}
\frac{W_{\alpha i} W_{\beta i}}
{\Vert W_{*i} \Vert} \quad \text{ for } \quad 1\leq\alpha\leq\beta\leq d.
\]
The third group consists of $d-1$ vectors of the form 
\begin{align}\label{def_vgam}
v^{(\gamma)}=\tilde{v}^{(\gamma)}-\frac{1}{\sqrt{d}+1}\tilde{v}^{(d)} \quad \text{for} \quad\gamma = 1,...,d-1,
\end{align}
where
\begin{equation}\label{vgam_expression}
    \tilde{v}^{(\gamma)} \coloneqq \frac{1}{\sqrt{2}}\left(v^{(\gamma,\gamma)} - \sqrt{\frac{d+2}{d}} \cdot 
v^{(0)}\right)\quad \text{for } \gamma = 1,...,d,
\end{equation}
and $(d^2 - d)/2$ vectors \[v^{(\alpha,\beta)} \quad \text{for} \quad 1\leq \alpha < \beta \leq d,\] whose approximate eigenvalues are $1/(2\pi (d+2))$. 
Then, the FIM can be written as 
\begin{align}\label{FIMdecomp}
  J &=  \frac{2d+1}{4\pi} v^{(0)} v^{(0)\top}
   + \frac{1}{4} \sum_{l = 1}^d v^{(l)} v^{(l)\top}  \\ \nonumber
   & + \frac{1}{2\pi (d+2)}\left(
       \sum_{\gamma =1}^{d-1} v^{(\gamma)} v^{(\gamma)\top}
       + \sum_{\alpha < \beta} v^{(\alpha,\beta)} v^{(\alpha,\beta)\top}
     \right)  \\ \nonumber
     & + R,
\end{align}
where the residual $R$ is semi-positive definite
and its trace is less than 
0.023$\cdot$trace($J$) with high probability.

\subsection{Fisher Metric and Function Space Inner Product}\label{sec_fisher_metric}

To provide deeper insight into the function space perspective of the simple ReLU network $f_v(x) = X^{\top} v$, we discuss the role of the Fisher information matrix (FIM) and its associated metric in connecting the parameter and function spaces. The FIM, defined as $J = \mathbb{E}_{x \sim N(0, I_d)}[X(x) X^{\top}(x)]$, induces an inner product in the parameter space via $\langle u, v \rangle_J = u^{\top} J v$. This inner product reflects the Fisher metric, which is intrinsically defined on the space of probability distributions and is independent of the parameterization.

For a probability distribution $p_\theta(x)$, the Fisher metric quantifies the squared norm of a small change in the log-likelihood, $d \log p_\theta(x)$, as $\mathbb{E}[(d \log p_\theta(x))^2]$. In our setting, where $x \sim N(0, I_d)$, this leads to an inner product in the function space defined by:
\[
\langle X, X' \rangle = \mathbb{E}_{x \sim N(0, I_d)}[X(x) X'(x)].
\]
Initially, one might assume this corresponds to the standard inner product in a Hilbert space defined by simple integration. However, due to the normal distribution assumption on $x$, this inner product is weighted by the Gaussian density, distinguishing it from the standard $L^2$-inner product.

This function space inner product has significant implications for the basis functions $f_v(x) = X^{\top} v$. Specifically, the inner product between two basis functions is:
\[
\langle f_v, f_u \rangle = \mathbb{E}_{x \sim N(0, I_d)}[(X^{\top} v)(X^{\top} u)] = v^{\top} J u,
\]
demonstrating that the function space inner product aligns with the FIM-induced inner product in the parameter space. For the approximate eigenvectors $v_i$ of $J$, as identified by \cite{Takeishi} and defined in Section 2.2, we have:
\[
\langle f_{v_i}, f_{v_j} \rangle = v_i^{\top} J v_j \approx \lambda_i \delta_{ij},
\]
where $\lambda_i$ are the approximate eigenvalues, and the approximation holds due to the finite dimensionality $m$. This implies that the basis functions $f_{v_i}$ are approximately orthogonal in the function space, with norms approximately equal to $\sqrt{\lambda_i}$. These properties are meaningful for understanding the learning dynamics in the function space, as gradient descent initially progresses along directions corresponding to the leading basis functions (see Section 6 of~\cite{TakeishiNN}).

A deeper connection exists between the FIM and the Neural Tangent Kernel (NTK), as noted in Section 1. In the infinite-width limit, the kernel associated with $J$ converges to the NTK, suggesting that the basis functions $f_{v_i}$ may serve as approximate eigenvectors in the function space. However, this analysis is complex and deferred to future work.

\section{Main Results}\label{sec_main}
We determine the asymptotic limit of $f_v(x) = X^{\top} v$ as $m \to \infty$, when $v$ lies in one of the three clusters of eigenvectors described in Section~\ref{sec_approx_eigen}, where the entries of $W$ are assumed to be i.i.d.\ $N(0,1/m)$ random variables.
We present four theorems corresponding to these three groups of approximate eigenvectors (Theorems \ref{thm3mod} and \ref{thm4} correspond to the third group). The proofs will be deferred to the appendices.
\begin{thm}\label{thm1}
Let $d > 2$. For each $x \in \R^d$, the first approximate eigenvector $v^{(0)}$ satisfies
\begin{equation}\label{G1Result}
    X^\top v^{(0)} \cip \frac{\sqrt{d}}{2\pi}B\Bigl(\frac{d}{2},\frac{1}{2}\Bigr)\Vert x \Vert \coloneqq F_0(x).
\end{equation}
\end{thm}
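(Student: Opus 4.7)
The plan is to exploit the positive homogeneity of ReLU to rewrite $X^\top v^{(0)}$ as an empirical average of IID random variables that do not depend on $m$, apply the weak law of large numbers to obtain a deterministic limit, and then evaluate the resulting Gaussian expectation in closed form using polar coordinates on $\R^d$.

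\emph{Reduction to an IID average.} I would set $U^{(i)} \coloneqq \sqrt{m}\,W^{(i)}$ so that $U^{(1)},\ldots,U^{(m)}$ are IID $N(0,I_d)$ vectors not depending on $m$. Using $\sigma(ct)=c\sigma(t)$ for $c \ge 0$, one has $\sigma(x^\top W^{(i)}) = \sigma(x^\top U^{(i)})/\sqrt{m}$ and $\Vert W^{(i)}\Vert = \Vert U^{(i)}\Vert/\sqrt{m}$, so
\[
X^\top v^{(0)} = \frac{1}{\sqrt{d}}\sum_{i=1}^m \sigma(x^\top W^{(i)})\Vert W^{(i)}\Vert = \frac{1}{m\sqrt{d}}\sum_{i=1}^m h(U^{(i)}), \quad h(u) \coloneqq \sigma(x^\top u)\Vert u\Vert.
\]
Since $|h(u)| \le \Vert x\Vert\,\Vert u\Vert^2$, which has finite first (and second) moment under $N(0,I_d)$, the weak law of large numbers yields $X^\top v^{(0)} \cip \E[h(U)]/\sqrt{d}$ with $U \sim N(0,I_d)$.

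\emph{Evaluating the limit.} I would compute $\E[h(U)]$ via the polar decomposition $U = R\Omega$, where $R = \Vert U\Vert$ satisfies $R^2 \sim \chi^2_d$ and $\Omega$ is uniform on $S^{d-1}$ independent of $R$. By homogeneity and rotational invariance (WLOG $x = \Vert x\Vert e_1$),
\[
h(U) = R^2\,\sigma(x^\top \Omega) = R^2\,\Vert x\Vert\,\sigma(\Omega_1),
\]
giving $\E[h(U)] = \Vert x\Vert\,\E[R^2]\,\E[\sigma(\Omega_1)] = d\Vert x\Vert\,\E[\sigma(\Omega_1)]$. The marginal density of $\Omega_1$ on $[-1,1]$ is proportional to $(1-t^2)^{(d-3)/2}$ with normalizing constant $\Gamma(d/2)/(\sqrt{\pi}\,\Gamma((d-1)/2))$; the substitution $u=1-t^2$ then produces $\E[\sigma(\Omega_1)] = \Gamma(d/2)/(2\sqrt{\pi}\,\Gamma((d+1)/2))$ after applying $(d-1)\Gamma((d-1)/2) = 2\Gamma((d+1)/2)$. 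Inserting this and using $B(d/2,1/2) = \sqrt{\pi}\,\Gamma(d/2)/\Gamma((d+1)/2)$ recovers the claimed constant $\sqrt{d}\,B(d/2,1/2)/(2\pi)$.

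The argument is essentially a textbook LLN calculation, so the main obstacle is bookkeeping rather than conceptual: checking that the two factors of $\sqrt{m}$ cancel to give the correct $1/m$ normalization, and that the Gamma-to-Beta reduction lands on exactly the stated constant. I would also note that the restriction $d>2$ is not needed for this computation itself (the density of $\Omega_1$ is integrable for $d \ge 2$) but is inherited from the approximate eigendecomposition of \cite{Takeishi24} that defines $v^{(0)}$.
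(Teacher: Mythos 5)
Your proposal is correct and follows essentially the same route as the paper's own proof: rescale the weights to IID standard Gaussians, apply the weak law of large numbers, split the limiting expectation into independent radial ($\chi^2_d$) and angular (uniform on $S^{d-1}$) factors, and integrate the marginal density of one spherical coordinate. The only differences are cosmetic (an explicit integrability check for the LLN and a Gamma-function rather than Beta-function bookkeeping at the end), and your closed-form constant agrees with \eqref{G1Result}.
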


\begin{thm}\label{thm2}
Let $d > 2$. For each $x \in \R^d$, the second group of approximate eigenvectors $\{v^{(l)}; l = 1,...,d\}$ satisfies
\begin{equation}\label{G2Result}
    X^\top v^{(l)} \cip \frac{x_l}{2} \coloneqq F_l(x).
\end{equation}
\end{thm}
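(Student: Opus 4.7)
The plan is to exploit the positive homogeneity of ReLU to rewrite $X^\top v^{(l)}$ as an empirical mean of IID random variables indexed by the hidden units, then apply the Weak Law of Large Numbers, and finally identify the limiting expectation by a short Gaussian computation.

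First I would unpack the definitions. Writing $W^{(i)} \in \R^d$ for the $i$-th column of $W$, one has $v^{(l)}_i = W^{(i)}_l$ and $X_i = \sigma(x^\top W^{(i)})$, so
\[
X^\top v^{(l)} = \sum_{i=1}^m \sigma\!\left(x^\top W^{(i)}\right) W^{(i)}_l.
\]
Rescaling $W^{(i)} = \xi^{(i)}/\sqrt{m}$ with $\xi^{(i)}$ IID $N(0, I_d)$, and using $\sigma(cz) = c\,\sigma(z)$ for $c>0$ to pull the factor $1/\sqrt{m}$ outside ReLU, this becomes
\[
X^\top v^{(l)} = \frac{1}{m} \sum_{i=1}^m \sigma\!\left(x^\top \xi^{(i)}\right) \xi^{(i)}_l.
\]
The summands $Y_i \coloneqq \sigma(x^\top \xi^{(i)}) \xi^{(i)}_l$ are IID with finite second moment (ReLU is $1$-Lipschitz and Gaussians have all moments), so by the Weak Law of Large Numbers,
\[
X^\top v^{(l)} \cip \E[Y_1] = \E\!\left[\sigma(x^\top \xi)\, \xi_l\right], \qquad \xi \sim N(0, I_d).
\]

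To identify this expectation as $x_l/2$, I would apply Stein's identity $\E[\xi_l g(\xi)] = \E[\partial_l g(\xi)]$ with $g(\xi) = \sigma(x^\top \xi)$. Since $\sigma$ is Lipschitz, $g$ is absolutely continuous in each coordinate, with $\partial_l g(\xi) = x_l \mathbf{1}(x^\top \xi > 0)$ almost everywhere. For $x \neq 0$, $x^\top \xi \sim N(0,\|x\|^2)$ is symmetric about $0$, hence
\[
\E[Y_1] = x_l \cdot P(x^\top \xi > 0) = \frac{x_l}{2}.
\]
The case $x = 0$ is trivial since then $X \equiv 0$. As a sanity check, one can also perform the bivariate Gaussian integral directly by noting that $(U,\xi_l)$ with $U \coloneqq x^\top \xi$ is jointly Gaussian with $\E[\xi_l \mid U] = (x_l/\|x\|^2)\,U$, and using $\E[\sigma(U) U] = \|x\|^2/2$.

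The main obstacle here is mild: after the homogeneity trick the proof is essentially a one-line law of large numbers, and the only delicate point is the use of Stein's identity for the non-smooth ReLU, which is covered by its Lipschitz regularity (or avoided altogether by the explicit bivariate Gaussian calculation). Making the convergence fully rigorous requires only that the variance of $Y_1$ be finite, which is immediate, so classical WLLN applies and no further concentration machinery is needed.
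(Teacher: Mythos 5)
Your proof is correct. The first half --- using positive homogeneity of ReLU to pull the $1/\sqrt{m}$ scaling out of $\sigma$, rewriting $X^\top v^{(l)}$ as an empirical mean of IID summands $\sigma(x^\top \xi^{(i)})\xi^{(i)}_l$, and invoking the weak law of large numbers --- is exactly the paper's argument. Where you genuinely diverge is in evaluating the limiting expectation $\E[\sigma(x^\top \xi)\,\xi_l]$. The paper conditions on $Z_l$, replaces $\sum_{k\neq l}x_k Z_k$ by $\Vert x_{-l}\Vert\tilde Z$, applies a closed form for $\E[\sigma(a+bZ)]$ (its Lemma \ref{lemB1}), and then works through an exchange of integration order and several odd-function cancellations to reach $x_l/2$, treating the degenerate case $\Vert x_{-l}\Vert=0$ separately. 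You instead apply Stein's identity $\E[\xi_l g(\xi)]=\E[\partial_l g(\xi)]$ with $g(\xi)=\sigma(x^\top\xi)$, which is legitimate because $g$ is Lipschitz, hence absolutely continuous in each coordinate with $\partial_l g(\xi)=x_l\mathbf{1}(x^\top\xi>0)$ almost everywhere; symmetry of $x^\top\xi$ then gives $x_l/2$ immediately, and your fallback via $\E[\xi_l\mid x^\top\xi]=(x_l/\Vert x\Vert^2)\,x^\top\xi$ is an equally valid projection argument. Your route is shorter and needs no case split on $\Vert x_{-l}\Vert$; the paper's route is more elementary in that it avoids Stein's lemma for a non-smooth function, and its Lemma \ref{lemB1} is the kind of explicit ReLU--Gaussian formula that recurs in the harder proofs for the third group, which is presumably why the authors set it up that way. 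Both arguments are complete.
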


Since the analysis of $v^{(\gamma)}$ is complicated, we first analyze $\tilde{v}^{(\gamma)}$ in the following lemma.

\begin{lem}\label{thm3}
Let $d > 2$. For each $x \neq 0 \in \R^d$, the vectors $\{\tilde{v}^{(\gamma)}; \gamma = 1,...,d \}$ satisfy
\begin{align}\label{3a_general}
    X^\top \tilde{v}^{(\gamma)} \cip &\frac{d\sqrt{d+2}}{2\pi(d+1)\sqrt{2}}
    B\Bigl(\frac{d}{2},\frac{1}{2}\Bigr)  \left( \frac{x_\gamma^2}{\Vert x \Vert}- \frac{\Vert x \Vert}{d}\right).
\end{align}
\end{lem}

{\bf Remark:} 
The right hand side of \eqref{3a_general} is proportional to the difference between $x_\gamma^2/\Vert x \Vert$ and their mean for $\gamma$, which equals $\Vert x \Vert/d$. Note that $\Vert x\Vert$ is proportional to \eqref{G1Result}.

\begin{thm}\label{thm3mod}
Let $d > 2$. For each $x \neq 0 \in \R^d$, the first $d-1$ approximate eigenvectors $\{v^{(\gamma)}; \gamma = 1,...,d-1 \}$ in the third group satisfy
\begin{align*}
&X^\top v^{(\gamma)} \cip \; \\&
\frac{d\sqrt{d+2}}{2\pi(d+1)\sqrt{2}} B\Bigl(\frac{d}{2},\frac{1}{2}\Bigr) \left( 
    \frac{x_\gamma^2}{\Vert x \Vert} - \frac{1}{\sqrt{d}+1} \frac{x_d^2}{\Vert x \Vert} - \frac{\Vert x \Vert}{d+\sqrt{d}} 
\right) \\
&\coloneqq F_{\gamma}(x).
\end{align*}
\end{thm}

\begin{thm}\label{thm4}
Let $d > 2$. For each $x \neq 0 \in \R^d$, the approximate eigenvectors $\{v^{(\alpha,\beta)}; 1 \leq \alpha < \beta \leq d\}$ satisfy
\begin{align}\label{3b_general}
        X^\top v^{(\alpha,\beta)} &\cip \frac{d\sqrt{d+2}}{2\pi(d+1)}B\Bigl(\frac{d}{2},\frac{1}{2}\Bigr)\frac{x_\alpha x_\beta}{\Vert x\Vert} \coloneqq F_{\ab}(x).
\end{align}

\end{thm}

From these theorems, we see that when $m$ is sufficiently large, the basis functions primarily learned by gradient descent are proportional to
$\Vert x\Vert$, $x_l$ ($l=1,\ldots, d$), $(x_\gamma^2-x_d^2/(\sqrt{d}+1))/\Vert x \Vert- \Vert x \Vert/(d+\sqrt{d})$ ($\gamma =1,\ldots, d-1$), and $x_\alpha x_\beta /\Vert x \Vert$ ($1\leq \alpha < \beta \leq d$). 

In Section \ref{sec_fisher_metric}, we also discussed the approximate orthogonality of these basis functions when $x$  is generated by a standard multivariate normal distribution. Specifically, for approximate eigenvectors $v_i$ and $v_j$, \[
\langle f_{v_i}, f_{v_j} \rangle = v_i^{\top} J v_j \approx \lambda_i \delta_{ij}.
\] 
For $i \neq j$, $\langle f_{v_i}, f_{v_j} \rangle = 0$ can be shown using the explicit forms from the theorems and the pairwise independence of $\Vert x \Vert$, $x_l/\Vert x \Vert$, and $x_k/\Vert x \Vert $ with $l \neq k$.

For sufficiently large $d$, when $i = j$, approximating $B(d/2,1/2) \approx \sqrt{\pi(2d+1)}/d$ allows us to recover the respective approximate eigenvalues for each group of eigenvectors in \eqref{FIMdecomp}:
\[\E_{x \sim N(0,I_d)}\left[ (X^\top v^{(0)})^2\right] \approx \left( \frac{\sqrt{d}}{2\pi}\frac{\sqrt{\pi(2d+1)}}{d}\right)^2\E[\Vert x\Vert^2] = \frac{2d+1}{4\pi},\]
\[\E_{x \sim N(0,I_d)}\left[ (X^\top v^{(l)})^2\right] = \frac{1}{4}\E[x_l^2] = \frac{1}{4} \quad \text{for}\quad l = 1,...,d,\]
\begin{align*}
    \E_{x \sim N(0,I_d)}\left[ (X^\top {v}^{(\gamma)})^2\right] &\approx \left( \frac{d\sqrt{d+2}}{2\pi(d+1)\sqrt{2}}\frac{\sqrt{\pi(2d+1)}}{d}\right)^2\\
    &\quad \times \E\left[\Vert x\Vert^2 \left(\frac{x_\gamma^2}{\Vert x\Vert^2} -\frac{x_d^2}{\Vert x\Vert^2(\sqrt{d}+1)}- \frac{1}{d + \sqrt{d}}\right)^2\right]\\
    &=\frac{(d+2)(2d+1)}{8\pi(d+1)^2}\E\bigg[ \frac{x_\gamma^4}{\Vert x\Vert^2}  + \frac{x_d^4}{\Vert x\Vert^2(\sqrt{d}+1)^2}\\
    &\quad+ \frac{\Vert x\Vert^2}{(d + \sqrt{d})^2} - \frac{2x_\gamma^2x_d^2}{\Vert x\Vert^2(\sqrt{d}+1)} - \frac{2 x_\gamma^2}{d + \sqrt{d}} + \frac{2 x_d^2}{\sqrt{d}(\sqrt{d}+1)^2}\bigg]\\
    &= \frac{(d+2)(2d+1)}{8\pi(d+1)^2}\E\bigg[ \frac{3}{d+2} + \frac{3}{d+2(\sqrt{d}+1)^2}\\
    &\quad+ \frac{1}{(\sqrt{d}+1)^2} - \frac{2}{(d+2)(\sqrt{d}+1)} - \frac{2}{d + \sqrt{d}} + \frac{2}{\sqrt{d}(\sqrt{d}+1)^2}\bigg]\\
    &\approx \frac{1}{2\pi (d+2)} \quad \text{for} \quad \gamma = 1,...,d-1,
\end{align*}
\begin{align*}
\E_{x \sim N(0,I_d)}\left[ (X^\top v^{(\alpha,\beta)})^2\right] &\approx \left( \frac{d\sqrt{d+2}}{2(d+1)\pi}\frac{\sqrt{\pi(2d+1)}}{d}\right)^2\E\left[\frac{x_\alpha^2x_\beta^2}{\Vert x\Vert^2}\right]\\
&= \frac{(d+2)(2d+1)}{4(d+1)^2\pi}\frac{1}{d+2}\\
&\approx \frac{1}{2\pi (d+2)} \quad \text{for }\quad 1\leq\alpha < \beta \leq d.
\end{align*}

\section{Simulation}\label{sec_sim}

We examine how accurate our main results are by numerical simulation.

\subsection{Setup}\label{sec_setup}
We validate the theoretical results shown in Section \ref{sec_main}. Following the model \eqref{ReluModel}, we set $d = 10, 50, 100$, and generate $N = 100$ independent copies of $x \sim N(0,I_d)$, which reflect realistic training inputs. We also fix the dimensions of the middle layer to be $m = 500, 10000, 100000$, and generate the weight matrix $W$ element-wise as instances of $N(0,1/m)$ i.i.d. random variables. For approximate eigenvectors $v$, we show that the realizations $X^\top v$ are consistent with the asymptotic limits identified. 

The simulations were run on a desktop with a 12th Gen Intel(R) Core(TM) i7-12700KF CPU (12 cores, 20 logical processors, 3.60 GHz) and 32 GB RAM (3200 MHz), using RStudio Desktop on a Windows system with 1 TB SSD storage.

\subsection{Simulation results}\label{sec_sim_res}
We compute the mean absolute error (MAE $= N^{-1}\sum_{i=1}^N |F(x_i) - X(x_i)^\top v|$) for approximate eigenvectors $v$ and their corresponding limiting functions $F$ identified in Section \ref{sec_main}. Note that for the approximate eigenvectors of Group 3, we use $F_{\gamma\gamma}$ and $F_{\ab}$ in \eqref{3a_general} and \eqref{3b_general} without the remainders $h_\gamma$, $h_d$, and $h_{\ab}$.

The results are shown in Table \ref{tab1} and are consistent with the theorems. Unsurprisingly, the accuracy scales with the number of hidden neurons $m$, as reflected by the decreased MAE. It can also be seen that performance drops with the input dimension $d$, which is likely caused by the increased variance of $\Vert x \Vert^2 \sim \chi^2(d)$. We also visualize this fit in Figure \ref{fig1}, where we plot the theoretical values against the values of $X^\top v$ for each group over $N = 100$ instances of $x \sim N(0,I_d)$. 
The theoretical values are shown as red lines, while the values of $X^\top v$ are plotted as blue dots.

\begin{table}[h]
  \centering
  \caption{For approximate eigenvectors $v$ described in Section \ref{sec_approx_eigen}, this table shows the mean absolute error (MAE) between realizations of $X^\top v$ and the asymptotic limits identified in theorems \ref{thm1} to \ref{thm4}.}
  \begin{tabular}{|c|c|c|c|c|c|c|}
    \hline
    $d$ & $m$ & G1 ($v^{(0)}$) & G2 ($v^{(l)}$) & G3 ($v^{(\gamma)}$) & G3 ($v^{(\alpha,\beta)}$)\\
    \hline
    $50$  &   500  & 0.1344 & 0.1782 & 0.1739 & 0.1748\\
    $100$ &   500  & 0.2033 & 0.3874 & 0.3471 & 0.2486\\
    $10$ & 10000 & 0.0159 & 0.0189 & 0.0140 & 0.0159\\
    $50$ & 10000 & 0.0276 & 0.0506 & 0.0368 & 0.0679\\
    $100$& 10000 & 0.0481 & 0.0986 & 0.0452 & 0.0613\\
    $10$ &100000 & 0.0058 & 0.0056 & 0.0045 & 0.0043\\
    $50$ &100000 & 0.0112 & 0.0168 & 0.0151 & 0.0169\\
    $100$&100000 & 0.0152 & 0.0198 & 0.0173 & 0.0202\\
    \hline
  \end{tabular}
  \label{tab1}
\end{table}

\begin{figure}[htbp]
  \centering
    \includegraphics[width=0.45\textwidth]{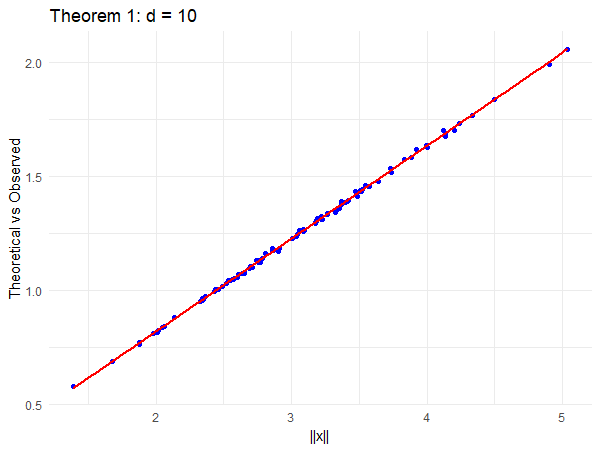}
    \includegraphics[width=0.45\textwidth]{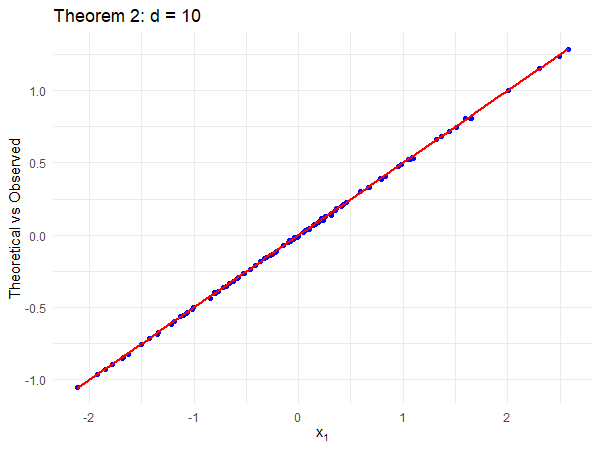}
    \vspace{1em}
    \includegraphics[width=0.45\textwidth]{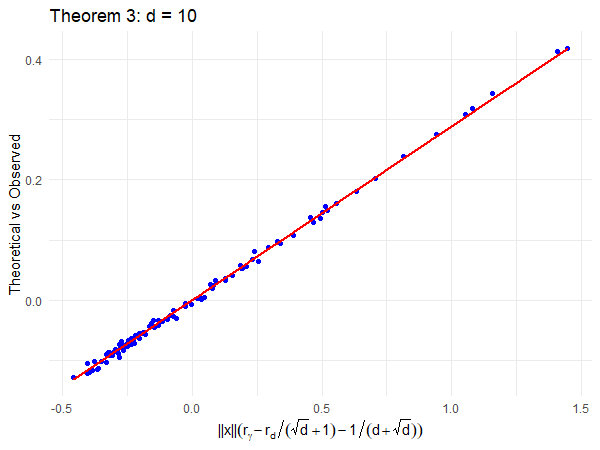}
    \includegraphics[width=0.45\textwidth]{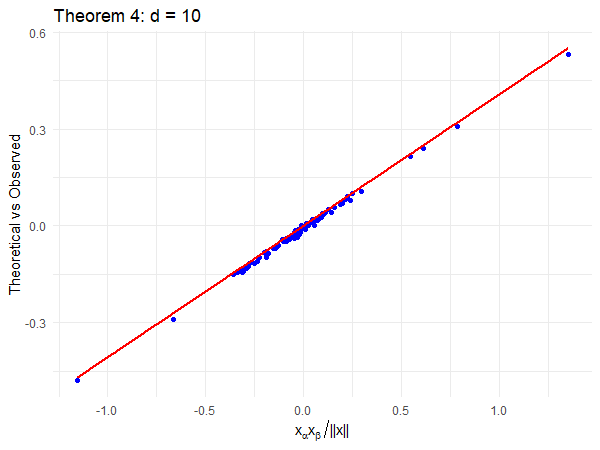}
  \caption{
  {Values of $X^\top v$ vs.\ theoretical $F(x)$ for Theorems \ref{thm1}–\ref{thm4} ($d=10,m=100000$).}
  }
  \label{fig1}
\end{figure}

\section{Conclusion and Discussion}\label{sec_conclusion}
This work advances the theoretical understanding of two-layer ReLU neural networks in the infinite-width limit by deriving the asymptotic functional forms of basis functions aligned with the approximate eigenvectors of the FIM. Our main results, encapsulated in Theorems \ref{thm1} to \ref{thm4}, reveal that these basis functions converge to forms proportional to $\Vert x \Vert$, $x_l$, $(x_\gamma^2-x_d^2/(\sqrt{d}+1))/\Vert x \Vert- \Vert x \Vert/(d+\sqrt{d})$, and $x_\alpha x_\beta/\Vert x\Vert$, prioritized by gradient descent due to their alignment with the FIM’s leading eigenvectors regardless of parametrization. The FIM induced inner product, which approximates orthogonality in the function space, establishes a novel connection between parameter and function spaces, offering insights into neural network expressivity and optimization dynamics. Simulations validate these theoretical approximations, with mean absolute errors decreasing as the number of hidden units ($m$) increases, confirming practical relevance.

These findings have significant implications for deep learning theory. By characterizing the functions learned early in training, our work provides a framework for analyzing how ReLU networks prioritize certain patterns, such as radial or coordinate-specific features, which can inform model design and initialization strategies. The connection to the Neural Tangent Kernel (NTK) suggests potential extensions to deeper architectures, where similar function space analyses could elucidate generalization properties.


\section{Limitations}

Our analysis has several limitations concerning the role of the input distribution $p_X$ and the modeling assumptions. 
First, while the convergence results in our theorems are independent of $p_X$—since they are determined by the random weight matrix $W$—the interpretation of the basis functions $v^{(0)}, v^{(l)}, \dots$ as approximate eigenvectors of the Fisher information matrix (FIM) crucially depends on $p_X$. 
If $p_X$ is rotationally invariant (e.g., the standard Gaussian), orthogonality relations such as $v^{(0)\top} J v^{(l)} = 0$ and norm equalities like $\langle F_0,F_0\rangle = (2d+1)/(4\pi)$ hold. 
For non-rotationally invariant $p_X$, these identities may no longer be valid, and the relative magnitudes of the functions $F_0, F_l, \dots$ can change, potentially altering the natural hierarchical structure suggested by our approximation \eqref{theapproximation}.

Second, our numerical evaluation of this effect is limited. 
When we approximated $p_X$ by empirical distributions sampled from the standard Gaussian ($d=10$ with sample sizes $1000$ and $100$), the eigenvalue distributions were close to the theoretical ones in the large-sample case (within $\pm 20\%$ error), but showed larger deviations for smaller samples (about $\pm 50\%$ error). 
Although no reversal of eigenvalue ordering was observed, this indicates that finite-sample effects or departures from Gaussianity may influence the hierarchy of the basis functions.

Third, our work currently builds upon the approximate spectral eigendecomposition of the FIM in two-layer ReLU networks under normally distributed training data. 
This may limit applicability to real-world datasets that are correlated or non-Gaussian, or to settings where deeper networks are needed to capture more complex non-linear relationships. 
Moreover, our analysis assumes the infinite-width limit ($m \to \infty$), which simplifies the theory but may not fully capture the dynamics of finite-width networks. 
In particular, smaller $m$ can be more sensitive to stochastic effects and deviate from the asymptotic predictions.

Finally, our current framework does not directly handle cases where the data are distributed on lower-dimensional manifolds within $\mathbb{R}^d$. 
In such settings, the FIM may become degenerate even under under-parameterization, and different analytical tools are required. 
We believe future work could explore generalizations beyond ReLU activations, deeper network architectures, finite-width regimes, and more general input distributions, which would broaden the applicability of our results.

\section*{Acknowledgments}
The authors give their sincere gratitude to Professors Hiroshi Nagaoka, Noboru Murata, and Kazushi Mimura for the valuable discussion with them. This work was supported by JSPS KAKENHI, Grant Number JP23H05492, JP25K24611. 

{
\small

\begin{thebibliography}{99}

\bibitem{Bartlett}
P. L. Bartlett, A. Montanari, and A. Rakhlin, ``Deep learning: a statistical viewpoint,'' {\it Acta Numerica}, vol. 30, pp. 87–201, 2021.




\bibitem{prml}
C. M. Bishop,
 {\it Pattern Recognition and Machine Learning,}
 Springer, 2006.

\bibitem{Bietti19}
Bietti, A. and Mairal, J.,
{``On the Inductive Bias of Neural Tangent Kernelsj,''}
In {\it Proc. of the 33rd International Conference on Neural Information Processing, }
  pp.\ {12893–12904}, {2019}.

\bibitem{nnrw}
Weipeng Cao, Xizhao Wang, Zhong Ming, Jinzhu Gao,
``A review on neural networks with random weights,''
{\it Neurocomputing},
Volume 275,
2018, Pages 278-287.

\bibitem{Jacot}
A. Jacot, F. Gabriel, and C. Hongler, ``Neural tangent kernel: Convergence and generalization in neural networks,'' 
Presented at the 32nd Conference on Neural Information Processing Systems, arXiv:1806.07572v3, 2018.

\bibitem{SongGupta1997}
D. Song and A. K. Gupta, ``$L_p$-norm uniform distribution,'' {\it Proc. Amer. Math. Soc.}, vol. 125, no. 2, pp. 595--601, 1997.


\bibitem{Takeishi}
Y. Takeishi, M. Iida, and J. Takeuchi, 
``Approximate Spectral Decomposition of Fisher Information Matrix for Simple ReLU Networks,'' arXiv:2111.15256, 2021.

\bibitem{TakeishiNN}
Y. Takeishi, M. Iida, and J. Takeuchi,
``Approximate Spectral Decomposition of Fisher Information Matrix for Simple ReLU Networks,''
{\it Neural Networks}, vol. 164, pp. 691-706, July, 2023.

\bibitem{Takeishi24}
Y. Takeishi and J. Takeuchi, 
``Risk Bounds on MDL Estimators for Linear Regression Models with Application to Simple ReLU Neural Networks,'' arXiv:2407.03854, 2024.

\bibitem{MeiMontanari}
S. Mei and A. Montanari,
"The Generalization Error of Random Features Regression: Precise Asymptotics and the Double Descent Curve,"
{\it Communications on Pure and Applied Mathematics}, vol. 75, pp. 667-766, April, 2022. 

\bibitem{RahimiRecht}
A. Rahimi and B. Recht,
"Random Features for Large-Scale Kernel Machines,"
Presented at the 20th Conference on Neural Information Processing Systems, 2007

\end{thebibliography}

}

\newpage
\section*{Appendix}
\appendix
\section{Proof of Theorem \ref{thm1}}\label{AppA}
Let $v^{(0)} \coloneqq \left(\Vert W^{(1)}\Vert/\sqrt{d},...,\Vert W^{(m)}\Vert/\sqrt{d}\right)$, with $W^{(i)} \in \R^d$ for $i = 1,...,m$. First note that for any positive constant $a > 0$, $\sigma(ax) = a \sigma(x)$. By rewriting and applying the weak law of large numbers, we get
\begin{align*}
    X^\top v^{(0)} &= \sum_{i=1}^m \sigma (x^\top W^{(i)}) \frac{\Vert W^{(i)} \Vert}{\sqrt{d}}\\
    &= \sum_{i=1}^m \sigma \left(\frac{x^\top Z^{(i)}}{\sqrt{m}}\right) \frac{\Vert Z^{(i)} \Vert}{\sqrt{dm}}\\
    &= \frac{1}{m} \sum_{i=1}^m \sigma (x^\top Z^{(i)}) \frac{\Vert Z^{(i)} \Vert}{\sqrt{d}}\\
    &\cip \E\left[\sigma(x^\top Z) \frac{\Vert Z \Vert}{\sqrt{d}}\right],
\end{align*}
where $Z^{(i)} \sim N(0,I_d)$ are independent for $i = 1,...,m$, and the expectation is taken with respect to $Z \sim N(0,I_d)$. Evaluating the expectation explicitly, we have \[ \E\left[\sigma(x^\top Z) \frac{\Vert Z \Vert}{\sqrt{d}}\right]
  = \E\left[\sigma\left(x^\top \frac{Z}{\Vert Z \Vert}\right) \frac{\Vert Z \Vert^2}{\sqrt{d}}\right] = \E\left[\sigma\left(x^\top \frac{Z}{\Vert Z \Vert}\right) \right] \E \left[\frac{\Vert Z \Vert^2}{\sqrt{d}}\right]\]
by the independence of the magnitude $\Vert Z \Vert^2$ and direction $Z/\Vert Z \Vert$ of $Z$. The spherical symmetry of the distribution of $ {Z}/\Vert Z \Vert$ means we may assume $x = \Vert x \Vert(1,0,0,...,0)$ to evaluate the first expectation. Denoting $\hat Z \coloneqq Z/\Vert Z \Vert$, the marginal density of $\hat Z_1$ is given by \[f_{\hat Z_1}(u) = \frac{(1 - u^2)^{\frac{d-1}{2}-1}}{B(\frac{d-1}{2},\frac{1}{2})},\] where $B(\cdot,\cdot)$ is the beta function. Hence, 
\begin{align*}
    \E\left[\sigma(x^\top \hat Z)\right] =  \E\left[\sigma\left(\Vert x \Vert \hat Z_1\right)\right] &= \Vert x \Vert \E\left[\sigma(\hat Z_1)\right] \\
    &= \Vert x \Vert \int_{-1}^1 \sigma(u) \frac{(1 - u^2)^{\frac{d-1}{2}-1}}{B(\frac{d-1}{2},\frac{1}{2})} du\\
    &= \Vert x \Vert \int_{0}^1 u \frac{(1 - u^2)^{\frac{d-1}{2}-1}}{B(\frac{d-1}{2},\frac{1}{2})} du\\
    &= \Vert x \Vert \left[\frac{-(1-u^2)^{\frac{d-1}{2}}}{(d-1)B(\frac{d-1}{2},\frac{1}{2})}\right]^1_0\\
    &= \frac{\Vert x \Vert}{(d-1)B(\frac{d-1}{2},\frac{1}{2})}\\
    &= \frac{1}{2\pi}B\Bigl(\frac{d}{2},\frac{1}{2}\Bigr)\Vert x \Vert,
\end{align*}
where $\sigma\left(\Vert x \Vert \hat Z_1\right) = \Vert x \Vert \sigma(\hat Z_1)$ follows from the non-negativity of $\Vert x \Vert$. The second term  \[\E\left[ \frac{\Vert Z \Vert^2}{\sqrt{d}}\right] = \sqrt{d}\] is straightforward since $\Vert Z \Vert^2$ is $\chi$-squared distributed with $d$ degrees of freedom.
Combining gives 
\begin{align*}
    X^\top v^{(0)} \cip \frac{\sqrt{d}}{2\pi}B\Bigl(\frac{d}{2},\frac{1}{2}\Bigr)\Vert x \Vert.
\end{align*}
\section{Proof of Theorem \ref{thm2}}\label{AppB}
The second group consists of $d$ eigenvectors of the form $v^{(l)} \coloneqq W_l$ for $l = 1,...,d$, where $W_l$ is the $l$'s row of the weight matrix $W$. Following the same argument in Appendix \ref{AppA} and by noting that $W_{li} = W^{(i)}_l$ and $Z_{li} = Z^{(i)}_l$, we get
\begin{align*}
    X^\top v^{(l)} &= \sum_{i=1}^m \sigma (x^\top W^{(i)})W^{(i)}_{l}\\
    &= \sum_{i=1}^m \sigma \left(\frac{x^\top Z^{(i)}}{\sqrt{m}}\right) \frac{Z^{(i)}_{l}}{\sqrt{m}}\\
    &= \frac{1}{m} \sum_{i=1}^m \sigma (x^\top Z^{(i)}) Z^{(i)}_{l}\\
    &\cip \E\left[\sigma(x^\top Z) Z_{l}\right],
\end{align*}
where the expectation is taken with respect to $Z \sim N(0,I_d)$. Write the last expectation as \[
\E\left[\sigma(x^\top Z) Z_{l}\right] = \E\left[ \sigma\left(x_lZ_l + \sum_{k \neq l} x_kZ_k\right)Z_l\right].
\]
Here,  note that $\sum_{k \neq l} x_kZ_k$ is normally distributed and has the same distribution as $\Vert x_{-l} \Vert \tilde Z$, where $\tilde Z \sim N(0,1)$ is independent of $Z_l$. Appealing to the tower law of expectation then yields
\begin{equation}\label{G2Eq1}
    \E\left[ \sigma\left(x_lZ_l + \sum_{k \neq l} x_kZ_k\right)Z_l\right] =  \E_{Z_l} \left[\E_{\tilde Z}\left\{Z_l \sigma(x_lZ_l +  \Vert x_{-l} \Vert \tilde Z )\vert Z_l\right\}\right].
\end{equation}
Here, if $\Vert x_{-l} \Vert = 0$ and if $x_l \geq 0$, the calculation simplifies to 
\begin{align*}
    \E\left[Z_l\sigma\left(x_lZ_l\right)\right] &= \int_0^\infty x_lu^2\phi(u) du\\
    &= x_l \left[u\phi(u)\right]_0^\infty + x_l\int_0^\infty \phi(u) du\\
    &= \frac{x_l}{2}.
\end{align*}
Note that if $x_l < 0$, \[\E\left[Z_l\sigma\left\{x_lZ_l\right\}\right] = \E\left[(-Z_l)\sigma\left\{x_l(-Z_l)\right\}\right] = -\E\left[Z_l\sigma\left((-x_l)-Z_l\right)\right] = -\frac{-x_l}{2} = \frac{x_l}{2}.\]
Therefore, we may assume that $x_l > 0$ without loss of generality.
In general, when $\Vert x_{-l} \Vert \neq 0$, the following lemma is useful:
\begin{lem}\label{lemB1}
    Let $Z \sim N(0,1)$, $a,b \in \R$ be fixed constants with $b \neq 0$, and denote $\phi$ and $\Phi$ as the density and cumulative distribution functions of $Z$ respectively. Then \[\E[\sigma(a+bZ)] = a\Phi\left(\frac{a}{|b|}\right) + |b|\phi\left(\frac{a}{|b|}\right).\]
\end{lem}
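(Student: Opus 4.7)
The plan is to evaluate the integral $\E[\sigma(a+bZ)] = \int_{-\infty}^{\infty} \sigma(a+bz)\phi(z)\,dz$ directly by identifying the region where $a+bz \ge 0$ and then splitting the resulting integral into a constant-times-probability piece and a first-moment piece.

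First I would handle the sign of $b$ by a symmetry reduction: since $Z \stackrel{d}{=} -Z$, the random variable $bZ$ has the same distribution as $|b|Z$, so $\E[\sigma(a+bZ)] = \E[\sigma(a+|b|Z)]$. This lets me assume $b > 0$ replaced by $|b|$ throughout the computation, which eliminates the need to do two sign cases. With this reduction, the condition $a+|b|z > 0$ becomes simply $z > -a/|b|$, so
\[
\E[\sigma(a+|b|Z)] = \int_{-a/|b|}^{\infty}(a+|b|z)\phi(z)\,dz = a\!\int_{-a/|b|}^{\infty}\!\phi(z)\,dz + |b|\!\int_{-a/|b|}^{\infty}\!z\phi(z)\,dz.
\]

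Next I would evaluate the two pieces. The first integral is $P(Z > -a/|b|) = 1 - \Phi(-a/|b|) = \Phi(a/|b|)$ by the symmetry $\Phi(-t) = 1-\Phi(t)$, yielding the term $a\Phi(a/|b|)$. For the second integral, I use the antiderivative identity $\frac{d}{dz}[-\phi(z)] = z\phi(z)$, which gives
\[
\int_{-a/|b|}^{\infty} z\phi(z)\,dz = \bigl[-\phi(z)\bigr]_{-a/|b|}^{\infty} = \phi(-a/|b|) = \phi(a/|b|),
\]
where the last equality uses that $\phi$ is even. Multiplying by $|b|$ gives the second term, and adding produces the claimed formula.

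There is no real obstacle here; the only subtlety is being careful with the sign of $b$, which is resolved cleanly by the symmetry of $Z$ at the very beginning so that $|b|$ appears naturally in the final expression rather than $b$. The rest is a routine manipulation using standard properties of $\phi$ and $\Phi$.
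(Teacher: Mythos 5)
Your proposal is correct and follows essentially the same route as the paper's own proof: both reduce to the case $|b|$ via the symmetry $Z \stackrel{d}{=} -Z$, split the integral over $\{z > -a/|b|\}$ into a probability term giving $a\Phi(a/|b|)$ and a first-moment term evaluated with the antiderivative $-\phi(z)$ giving $|b|\phi(a/|b|)$. No gaps; the argument is complete as written.
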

\begin{proof}
First note that $\E[\sigma(a+bZ)] = \E[\sigma(a - bZ)] = \E[\sigma(a+|b|Z)]$ because the law of $Z$ and $-Z$ are the same. Then, assuming $b > 0$, we have
    \begin{align*}
        \E[\sigma(a+bZ)] &= \int_{-a/b}^\infty (a+bz) \phi(z) dz \\
        &= \int_{-a/b}^\infty a \phi(z) dz + b\int_{-a/b}^\infty z \phi(z)dz\\
        &= a \left(1 -\Phi \left(\frac{-a}{b}\right)\right) + b\int_{-a/b}^\infty z \cdot \frac{1}{\sqrt{2\pi}} e^{-z^2/2} dz \\
        &= a\Phi \left(\frac{a}{b}\right) - b\left[\phi(z)\right]^\infty_{-a/b}\\
        &= a\Phi\left(\frac{a}{b}\right) + b\phi\left(\frac{a}{b}\right)
    \end{align*}
Therefore, for any $b \neq 0$, \[\E[\sigma(a+bZ)] =  a\Phi\left(\frac{a}{|b|}\right) + |b|\phi\left(\frac{a}{|b|}\right).\]
\end{proof}
Applying Lemma \ref{lemB1} with $a = x_lZ_l$ and $b = \Vert x_{-l}\Vert$ simplifies the RHS of \eqref{G2Eq1} to 
\begin{align*}
    &\E_{Z_l} \left[Z_l \left(x_lZ_l\Phi\left(\frac{x_lZ_l}{\Vert x_{-l}\Vert}\right) + \Vert x_{-l}\Vert\phi\left(\frac{x_lZ_l}{\Vert x_{-l}\Vert}\right)\right)\right]\\
    =&\E\left[x_l Z_l^2 \Phi\left(\frac{x_lZ_l}{\Vert x_{-l}\Vert}\right)\right] +\Vert x_{-l}\Vert\E\left[Z_l\phi\left(\frac{x_lZ_l}{\Vert x_{-l}\Vert}\right)\right] \\
    =&\E\left[x_l Z_l^2 \Phi\left(\frac{x_lZ_l}{\Vert x_{-l}\Vert}\right)\right],
\end{align*}
where $\E\left[Z_l\phi\left(x_lZ_l/\Vert x_{-l}\Vert\right)\right] = 0$ because $u \phi(C u)\phi(u)$ is an integrable and odd function on the real line for any $C \in \R$. Evaluating the expectation and switching the order of integration then give,
\begin{align*}
    \E\left[x_l Z_l^2 \Phi\left(\frac{x_lZ_l}{\Vert x_{-l}\Vert}\right)\right] &= x_l \int_{-\infty}^\infty z_l^2 \phi(z_l) \left(\int_{-\infty}^{x_lz_l/\Vert x_{-l}\Vert} \phi(u) du \right)dz_l\\
    &= x_l \int_{u = -\infty}^{u = \infty} \phi(u) \left(\int_{z_l = u\Vert x_{-l}\Vert/x_l}^\infty z_l^2 \phi(z_l) dz_l\right) du\\
    &= x_l \int_{-\infty}^{\infty} \phi(u) \left[-z_l\phi(z_l) + \Phi(z_l)\right]^\infty_{u\Vert x_{-l}\Vert/x_l} du\\
    &= x_l \int_{-\infty}^{\infty} \phi(u)\left(1 + \frac{u\Vert x_{-l}\Vert}{x_l}\phi\left(\frac{u\Vert x_{-l}\Vert}{x_l}\right) - \Phi\left(\frac{u\Vert x_{-l}\Vert}{x_l}\right)\right) du\\
    &= x_l - x_l\int_{-\infty}^{\infty} \phi(u)\Phi\left(\frac{u\Vert x_{-l}\Vert}{x_l}\right) du,
\end{align*}
where the second term in the parenthesis in the penultimate line can be deleted because it is an odd and integrable function. Finally, the same argument also applies to integrating $\phi(u)\left(\Phi\left(u\Vert x_{-l}\Vert/x_l\right) - 1/2\right)$, thus
\begin{align*}
    \int_{-\infty}^{\infty} \phi(u)\Phi\left(\frac{u\Vert x_{-l}\Vert}{x_l}\right) du &= \int_{-\infty}^{\infty} \phi(u)\left(\Phi\left(\frac{u\Vert x_{-l}\Vert}{x_l}\right) - \frac{1}{2}\right) + \frac{\phi(u)}{2} du\\
    &= \int_{-\infty}^{\infty} \frac{\phi(u)}{2} du\\
    &= \frac{1}{2}.
\end{align*}
Combining the results yields \[X^\top v^{(l)} \cip \frac{x_l}{2},\]
which completes the proof of Theorem \ref{thm2}.

\section{Proof of Lemma \ref{thm3}}\label{AppC}
We first evaluate the limit:
\begin{align*}
   X^\top v^{(\gamma,\gamma)} &= \sum_{i=1}^m \sigma (x^\top W^{(i)})\frac{\sqrt{d+2}W_{\gamma}^{(i)2}}{\Vert W^{(i)} \Vert}\\ 
   &=\sqrt{d+2} \sum_{i=1}^m \sigma \left(\frac{x^\top Z^{(i)}}{\sqrt{m}}\right) \frac{Z^{(i)2}_{\gamma}}{\sqrt{m}\Vert Z^{(i)}\Vert}\\ 
   &= \frac{\sqrt{d+2}}{m} \sum_{i=1}^m \sigma (x^\top Z^{(i)}) \frac{Z^{(i)2}_{\gamma}}{\Vert Z^{(i)}\Vert}\\
   &\cip \sqrt{d+2}\E\left[\sigma(x^\top Z) \frac{Z^2_{\gamma}}{\Vert Z \Vert}\right].
\end{align*}
The above expectation is calculated as
\[
\E\left[\sigma(x^\top Z) \frac{Z^2_{\gamma}}{\Vert Z \Vert}\right]
=\frac{d}{2\pi(d+1)}
   B\!\left(\frac{d}{2},\frac{1}{2}\right)\left(\frac{x_\gamma^2}{\|x\|}\,+\|x\|\right),
\]
which will be shown in Section \ref{comp_exp}.
Then, we obtain
\begin{align*}
     X^\top v^{(\gamma,\gamma)} \cip \frac{d\sqrt{d+2}}{2\pi(d+1)}
   B\!\left(\frac{d}{2},\frac{1}{2}\right)\left(\frac{x_\gamma^2}{\|x\|}\,+\|x\|\right).
\end{align*}
By (4)  and  (6) in the main paper, 
\begin{align*}
    X^\top \tilde{v}^{(\gamma)}
    &\cip \frac{1}{\sqrt{2}}\left( \frac{d\sqrt{d+2}}{2\pi(d+1)}
   B\!\left(\frac{d}{2},\frac{1}{2}\right)\left(\frac{x_\gamma^2}{\|x\|}\,+\|x\|\right) - \sqrt{\frac{d+2}{d}}\frac{\sqrt{d}}{2\pi}B\!\left(\frac{d}{2},\frac{1}{2}\right)\|x\| \right)\\
    &= \frac{1}{\sqrt{2}} \cdot \frac{d\sqrt{d+2}}{2\pi(d+1)}
    B\!\left(\frac{d}{2},\frac{1}{2}\right)
    \left(
    \frac{x_\gamma^2}{\|x\|}+\|x\|-\frac{d+1}{d}\|x\|
    \right)\\
    &=\frac{d\sqrt{d+2}}{2\pi(d+1)\sqrt{2}}
    B\!\left(\frac{d}{2},\frac{1}{2}\right)
    \left( \frac{x_\gamma^2}{\|x\|}- \frac{\|x\|}{d}\right).
\end{align*}

\section{Proof of Theorem \ref{thm3mod}}
By (3) in the main paper and Lemma \ref{thm3}, we have
\begin{align*}
    X^\top v^{(\gamma)} &= X^\top \tilde{v}^{(\gamma)}
    - \frac{1}{\sqrt{d}+1} X^\top \tilde{v}^{(d)} \\
    &\cip \frac{d\sqrt{d+2}}{2\pi(d+1)\sqrt{2}} 
    B\left(\frac{d}{2}, \frac{1}{2}\right) \Biggl( 
        \frac{x_\gamma^2}{\|x\|} - \frac{1}{\sqrt{d}+1} \frac{x_d^2}{\|x\|} 
        - \frac{1}{d} \left( 1 - \frac{1}{\sqrt{d}+1} \right) 
    \Biggr) \\
    &= \frac{d\sqrt{d+2}}{2\pi(d+1)\sqrt{2}} 
    B\left(\frac{d}{2}, \frac{1}{2}\right) \Biggl( 
        \frac{x_\gamma^2}{\|x\|} - \frac{1}{\sqrt{d}+1} \frac{x_d^2}{\|x\|} 
        - \frac{1}{d+\sqrt{d}} 
    \Biggr).
\end{align*}

\section{Proof of Theorem \ref{thm4}}\label{AppD}
\noindent We evaluate the limit 
\begin{align*}
    X^\top v^{(\alpha,\beta)} \cip \sqrt{d+2}\E\left[\sigma(x^\top Z) \frac{Z_\alpha Z_\beta}{\Vert Z \Vert}\right].
\end{align*}
The above expectation is calculated as
\[
\E\left[\sigma(x^\top Z) \frac{Z_{\alpha}Z_{\beta}}{\Vert Z \Vert}\right]
=\frac{d}{2\pi(d+1)}
   B\!\left(\frac{d}{2},\frac{1}{2}\right)\frac{x_\alpha x_\beta}{\|x\|},
\]
which will be shown in Section \ref{comp_exp}.
Then, we obtain
\begin{align*}
     X^\top v^{(\alpha,\beta)} \cip \frac{d\sqrt{d+2}  }{2(d+1)\pi}B\Bigl(\frac{d}{2},\frac{1}{2}\Bigr)\frac{x_\alpha x_\beta}{\Vert x\Vert} .
\end{align*}

\section{Computation of some expectations}\label{comp_exp}
In this section, we compute the expectations of 
\[
\E\left[\sigma(x^\top Z) \frac{Z_{\alpha}Z_{\beta}}{\Vert Z \Vert}\right] \qquad\text{and}\qquad \E\left[\sigma(x^\top Z) \frac{Z^2_{\gamma}}{\Vert Z \Vert}\right].
\]
Without loss of generality, we set $\alpha=1$, $\beta=2$, and $\gamma=1$.

First, let us compute 
\begin{align}
\mathbb{E}_{Z}\!\left[ \sigma(x^\top Z) \frac{Z_{1} Z_{2}}{\|Z\|} \right],
\label{eq:Zexp}
\end{align}
where $Z = (Z_{1}, \ldots, Z_{d})^\top$, and $\{e_i\}_{i=1}^d$ denotes the standard basis of $\mathbb{R}^d$. Namely, $Z_i=e_i^T Z$ for $i=1,\ldots, d$.

We define an orthogonal matrix $U = (u_1, \ldots, u_d)$ such that 
\[
Ux = \|x\|e_1.
\]
Since the distribution of $Z$ is rotationally invariant, \eqref{eq:Zexp} does not change if we replace $Z$ with $U^T Z$ in the expectation.  
Hence,
\begin{align}
\mathbb{E}_{Z}\!\left[ \sigma(x^\top Z) \frac{Z_{1} Z_{2}}{\|Z\|} \right]
&= \mathbb{E}_{Z}\!\left[ \sigma(x^\top U^\top Z) 
      \frac{(U^\top Z)_1 (U^\top Z)_2}{\|Z\|} \right] \\
&= \mathbb{E}_{Z}\!\left[ \sigma(\|x\|e_1^\top Z) 
      \frac{(u_1^\top Z) (u_2^\top Z)}{\|Z\|} \right] \\
&= \|x\| \, \mathbb{E}_{\hat Z,\|Z\|}\!\left[ 
      \sigma( \hat Z_1)\, (u_1^\top \hat Z) (u_2^\top \hat Z)  \|Z\|^2 
   \right],
\end{align}
where we defined $\hat Z = Z/ \|Z\|$.  
Since $\hat Z$ and $\|Z\|$ are independent,
\begin{align}
\mathbb{E}_{Z}\!\left[ \sigma(x^\top Z) \frac{Z_{1} Z_{2}}{\|Z\|} \right]
&= \|x\|\, \mathbb{E}_{\hat Z}\!\left[
    \sigma(\hat Z_1) (u_1^\top \hat Z)(u_2^\top \hat Z)
  \right]
  \mathbb{E}_{\|Z\|}\!\left[\|Z\|^2\right].
\end{align}

Because $\|Z\|^2 \sim \chi^2_d$, we have $\mathbb{E}_{\|Z\|}\!\left[\|Z\|^2\right] = d$.
Therefore,
\begin{align}
\mathbb{E}_{Z}\!\left[ \sigma(x^\top Z) \frac{Z_{1} Z_{2}}{\|Z\|} \right]
= d\,\|x\|\, \mathbb{E}_{\hat Z}\!\left[
    \sigma(\hat Z_1) (u_1^\top \hat Z)(u_2^\top \hat Z)
  \right].
\label{eq:main_reduced}
\end{align}

Next, express $u_1$ and $u_2$ in the standard basis:
\[
u_1 = \sum_{i=1}^d c_i e_i, 
\qquad
u_2 = \sum_{j=1}^d d_j e_j.
\]
Then,
\begin{align}
\mathbb{E}_{\hat Z}\!\left[
  \sigma(\hat Z_1) (u_1^\top \hat Z)(u_2^\top \hat Z)
\right]
&= \mathbb{E}_{\hat Z}\!\left[
    \sigma(\hat Z_1)
    \left(\sum_i c_i \hat Z_i\right)
    \left(\sum_j d_j \hat Z_j\right)
  \right] \\
&= \sum_{i,j} c_i d_j \,
   \mathbb{E}_{\hat Z}\!\left[
     \sigma(\hat Z_1)\hat Z_i\hat Z_j
   \right].
\end{align}

Due to the symmetry of $\hat Z$ on the unit sphere, the cross terms ($i \neq j$) vanish:
\[
\mathbb{E}_{\hat Z}\!\left[
  \sigma(\hat Z_1)\hat Z_i\hat Z_j
\right] = 0, \quad (i \ne j).
\]
Hence only the diagonal terms remain:
\[
\mathbb{E}_{\hat Z}\!\left[
  \sigma(\hat Z_1) (u_1^\top \hat Z)(u_2^\top \hat Z)
\right]
= \sum_i c_i d_i \,
  \mathbb{E}_{\hat Z}\!\left[
    \sigma(\hat Z_1)\hat Z_i^2
  \right].
\]

Let
\[
C_d = \mathbb{E}_{\hat Z}\!\left[
  \sigma(\hat Z_1)\hat Z_1^2
\right], 
\qquad 
C_d' = \mathbb{E}_{\hat Z}\!\left[
  \sigma(\hat Z_1)\hat Z_2^2
\right].
\]
Since $\sum_i c_i d_i = 0$ (because $u_1$ and $u_2$ are orthogonal), we have
\[
c_1 d_1 + \sum_{i\ge 2} c_i d_i = 0 
\quad\Rightarrow\quad 
\sum_{i\ge 2} c_i d_i = -c_1 d_1.
\]
Since $c_i=u_1^T e_i$ and $x=\|x\|U^T e_1$, we have $c_1=x_1/\|x\|$. Similarly, we have $d_j=u_2^T e_j$ and $d_1=x_2/\|x\|$, which gives 
$c_1 d_1 = x_1 x_2 / \|x\|^2$.

Thus,
\begin{align}
\mathbb{E}_{\hat Z}\!\left[
  \sigma(\hat Z_1)(u_1^\top \hat Z)(u_2^\top \hat Z)
\right]
&= c_1 d_1 C_d + \sum_{i\ge 2} c_i d_i C_d' 
 = c_1 d_1 (C_d - C_d') \\
&= \frac{x_1 x_2}{\|x\|^2} (C_d - C_d').
\end{align}

Finally, substituting this into \eqref{eq:main_reduced}, we obtain
\[
\mathbb{E}_{Z}\!\left[
  \sigma(x^\top Z)\frac{Z_1 Z_2}{\|Z\|}
\right]
= d(C_d - C_d')\,\frac{x_1 x_2}{\|x\|}.
\]
Using the result of the computation of $C_d$ and $C_d'$ in the next section, we obtain
\[
\mathbb{E}_{Z}\!\left[
  \sigma(x^\top Z)\frac{Z_1^2}{\|Z\|}
\right]
= \frac{d}{2\pi(d+1)}
   B\!\left(\frac{d}{2},\frac{1}{2}\right)\frac{x_1 x_2}{\|x\|}.
\]

Next, let us compute 
\begin{align}
\mathbb{E}_{Z}\!\left[ \sigma(x^\top Z) \frac{Z_{1}^2}{\|Z\|} \right].
\label{eq:Zexp2}
\end{align}
Similarly, we have
\begin{align}
\mathbb{E}_{Z}\!\left[ \sigma(x^\top Z) \frac{Z_{1}^2}{\|Z\|} \right]
&= \mathbb{E}_{Z}\!\left[ \sigma(x^\top U^\top Z) 
      \frac{(U^\top Z)_1^2}{\|Z\|} \right] \\
&= \mathbb{E}_{Z}\!\left[ \sigma(\|x\|e_1^\top Z) 
      \frac{(u_1^\top Z)^2}{\|Z\|} \right] \\
&= \|x\| \, \mathbb{E}_{\hat Z,\|Z\|}\!\left[ 
      \sigma( \hat Z_1)\, (u_1^\top \hat Z)^2  \|Z\|^2 
   \right]\\
&= d\,\|x\|\, \mathbb{E}_{\hat Z}\!\left[
    \sigma(\hat Z_1) (u_1^\top \hat Z)^2
  \right].
  \label{eq:main_reduced2}
\end{align}
Then,
\begin{align}
\mathbb{E}_{\hat Z}\!\left[
  \sigma(\hat Z_1) (u_1^\top \hat Z)^2
\right]
&= \mathbb{E}_{\hat Z}\!\left[
    \sigma(\hat Z_1)
    \left(\sum_i c_i \hat Z_i\right)^2
  \right] \\
&= \sum_{i,j} c_i c_j \,
   \mathbb{E}_{\hat Z}\!\left[
     \sigma(\hat Z_1)\hat Z_i\hat Z_j
   \right]\\  
&= \sum_i c_i^2 \,
  \mathbb{E}_{\hat Z}\!\left[
    \sigma(\hat Z_1)\hat Z_i^2
  \right].\\
  &= c_1^2 \,C_d
  +\sum_{i=2}^d c_i^2 \,
  C'_d.
\end{align}
Since
\[
u_1 = \sum_{i=1}^d c_i e_i,
\]
we have
\[
\sum_{i=1}^d c_i^2=1.
\]
Together with $c_1=x_1/\|x\|$, we have
\begin{align}
\mathbb{E}_{\hat Z}\!\left[
  \sigma(\hat Z_1) (u_1^\top \hat Z)^2
\right]
&= \frac{x_1^2}{\|x\|^2} C_d
  +\left(1-\frac{x_1^2}{\|x\|^2}\right)
  C'_d\\
&= \frac{x_1^2}{\|x\|^2}(C_d-C'_d)
  +C'_d.
\end{align}
Finally, substituting this into \eqref{eq:main_reduced2}, we obtain
\[
\mathbb{E}_{Z}\!\left[
  \sigma(x^\top Z)\frac{Z_1^2}{\|Z\|}
\right]
= d\left((C_d - C_d')\frac{x_1^2}{\|x\|}\,+C'_d\, \|x\|\right).
\]

Using the result of the computation of $C_d$ and $C_d'$ in the next section, we obtain
\[
\mathbb{E}_{Z}\!\left[
  \sigma(x^\top Z)\frac{Z_1^2}{\|Z\|}
\right]
= \frac{d}{2\pi(d+1)}
   B\!\left(\frac{d}{2},\frac{1}{2}\right)\left(\frac{x_1^2}{\|x\|}\,+\|x\|\right).
\]

\section{Computation of $C_d$ and $C_d'$}

In this section, we compute the quantities
\[
C_d \;=\; \mathbb{E}_{\hat Z}\!\left[ \sigma(\hat Z_1)\, \hat Z_1^2 \right],
\qquad
C_d' \;=\; \mathbb{E}_{\hat Z}\!\left[ \sigma(\hat Z_1)\, \hat Z_2^2 \right],
\]
where $\hat Z = (\hat Z_1,\ldots,\hat Z_d)$ is uniformly distributed on the unit sphere $S^{d-1}$ in $\mathbb{R}^d$, and $\sigma(t)=\max\{t,0\}$ is the ReLU function.
Due to the ReLU, only the region $\hat Z_1 > 0$ contributes.

Applying Theorem 2.1 in \cite{SongGupta1997} with $n=d$, $k=2$, and $p=2$, we obtain that the marginal density of $(\hat Z_1,\hat Z_2)$ is
\begin{equation}
p(z_1,z_2)
= \kappa_d \left( 1 - z_1^2 - z_2^2 \right)^{\frac{d-4}{2}},
\qquad
z_1^2 + z_2^2 \le 1,
\label{eq:joint-marginal}
\end{equation}
with the normalization constant
\begin{equation}
\kappa_d
= \frac{\Gamma\!\left(\frac{d}{2}\right)}{\pi\,\Gamma\!\left(\frac{d-2}{2}\right)}.
\label{eq:cd-const}
\end{equation}

We first compute
\[
C_d
= \mathbb{E}\!\left[ \sigma(Z_1)\, Z_1^2 \right]
= \int_{z_1 > 0} z_1 \cdot z_1^2 \cdot p(z_1,z_2)\, dz_1 dz_2
= \int_{z_1 > 0} z_1^3\, p(z_1,z_2)\, dz_1 dz_2.
\]

We change to polar coordinates in the $(z_1,z_2)$-plane:
\[
z_1 = r\cos\theta, \qquad
z_2 = r\sin\theta, \qquad
0 \le r \le 1,
\]
and the Jacobian is $r\,dr\,d\theta$.
The condition $z_1>0$ corresponds to $\cos\theta > 0$, i.e.,
\[
-\frac{\pi}{2} \le \theta \le \frac{\pi}{2}.
\]
Moreover,
\[
z_1^3 = (r\cos\theta)^3 = r^3 \cos^3\theta,
\qquad
1 - z_1^2 - z_2^2 = 1 - r^2.
\]
Therefore,
\begin{align}
C_d
&= \int_{-\pi/2}^{\pi/2} \int_0^1
   \bigl(r^3 \cos^3\theta\bigr)\,
   \kappa_d \bigl(1-r^2\bigr)^{\frac{d-4}{2}}\,
   r\,dr\,d\theta \\
&= \kappa_d
   \left(
   \int_{-\pi/2}^{\pi/2} \cos^3\theta\, d\theta
   \right)
   \left(
   \int_0^1 r^4 \bigl(1-r^2\bigr)^{\frac{d-4}{2}} dr
   \right).
\label{eq:Cd-separate}
\end{align}
A standard computation gives
\begin{equation}
\int_{-\pi/2}^{\pi/2} \cos^3\theta\, d\theta
= \frac{4}{3}.
\label{eq:ang-int-Cd}
\end{equation}
Define
\[
R_d
= \int_0^1 r^4 \bigl(1-r^2\bigr)^{\frac{d-4}{2}} dr.
\]
We use the substitution $u = r^2$, then
\[
r^4 dr =  \frac{u^{3/2}}{2}du.
\]
Hence
\begin{align}
R_d
&= \frac{1}{2} \int_0^1
    u^{3/2} (1-u)^{\frac{d-4}{2}} \, du \\
&= \frac{1}{2}
   B\!\left(\frac{5}{2}, \frac{d-2}{2}\right)
 = \frac{1}{2}
   \frac{\Gamma\!\left(\frac{5}{2}\right)
         \Gamma\!\left(\frac{d-2}{2}\right)}%
        {\Gamma\!\left(\frac{d+3}{2}\right)}.
\end{align}
Using $\Gamma(\tfrac{5}{2}) = \tfrac{3\sqrt{\pi}}{4}$, we obtain
\begin{equation}
R_d
= \frac{3\sqrt{\pi}}{8}
  \frac{\Gamma\!\left(\frac{d-2}{2}\right)}%
       {\Gamma\!\left(\frac{d+3}{2}\right)}.
\label{eq:radial-int}
\end{equation}

Substituting \eqref{eq:ang-int-Cd} and \eqref{eq:radial-int}
into \eqref{eq:Cd-separate}, we obtain
\begin{align}
C_d
&= \kappa_d \cdot \frac{4}{3} \cdot
    \frac{3\sqrt{\pi}}{8}
    \frac{\Gamma(\frac{d-2}{2})}{\Gamma(\frac{d+3}{2})} \\
&= \kappa_d \cdot
    \frac{\sqrt{\pi}}{2}
    \frac{\Gamma(\frac{d-2}{2})}{\Gamma(\frac{d+3}{2})}.
\end{align}
Finally, substituting \eqref{eq:cd-const},
\[
\kappa_d
= \frac{\Gamma(\frac{d}{2})}{\pi\,\Gamma(\frac{d-2}{2})},
\]
we obtain
\begin{align}
C_d
&= \frac{1}{\pi}\frac{\sqrt{\pi}}{2}
   \frac{\Gamma\!\left(\frac{d}{2}\right)}%
        {\Gamma\!\left(\frac{d+3}{2}\right)}\\
        &= \frac{1}{\pi}
   B\!\left(\frac{d}{2},\frac{3}{2}\right)\\
&= \frac{1}{\pi(d+1)}
   B\!\left(\frac{d}{2},\frac{1}{2}\right).
\end{align}

Next, we compute
\[
C_d'
= \mathbb{E}\!\left[ \sigma(Z_1)\, Z_2^2 \right]
= \int_{z_1>0} z_1 \cdot z_2^2 \cdot p(z_1,z_2)\, dz_1 dz_2.
\]
In polar coordinates $z_1 = r\cos\theta$, $z_2 = r\sin\theta$ as above, we have
\[
z_1 \cdot z_2^2
= (r\cos\theta)\,(r\sin\theta)^2
= r^3 \cos\theta \sin^2\theta.
\]
Including the Jacobian $r\,dr\,d\theta$ and the density factor
$(1-r^2)^{\frac{d-4}{2}}$, we obtain
\begin{align}
C_d'
&= \int_{-\pi/2}^{\pi/2} \int_0^1
    \left( r^3 \cos\theta \sin^2\theta \right)
    \kappa_d \left(1-r^2\right)^{\frac{d-4}{2}}
    r\,dr\,d\theta \\
&= \kappa_d
   \left(
   \int_{-\pi/2}^{\pi/2} 
      \cos\theta \sin^2\theta\, d\theta
   \right)
   \left(
   \int_0^1 r^4 (1-r^2)^{\frac{d-4}{2}} dr
   \right).\\
   &= \kappa_d
   \, \cdot\frac{2}{3}\,
   R_d.
\end{align}
Thus, we obtain
\begin{align}
C'_d
&= \frac{C_d}{2}= \frac{1}{2\pi(d+1)}
   B\!\left(\frac{d}{2},\frac{1}{2}\right).
\end{align}

\vfill

\end{document}